%

\documentclass{ws-ijprai}

\usepackage{natbib}
\usepackage{subfigure}

\usepackage{tikz}
\usetikzlibrary{matrix,calc,shapes}
\tikzset{
  treenode/.style = {shape=rectangle, rounded corners,
                     draw, anchor=center,
                     text width=17em, align=center,
                     inner sep=2ex},
  decision/.style = {treenode, diamond, aspect=2, inner sep=-10pt},
  root/.style     = {treenode,text width=15em},
  env/.style      = {treenode,text width=15em},
  finish/.style   = {treenode,text width=15em}
}

\newcommand{\no}{edge  node [left]  {no}}

\newcommand{\ind}{\mathrel{\perp\mspace{-10mu}\perp}}

\newtheorem{note}{Note}

\begin{document}


\markboth{N.~Gribkova and R.~Zitikis}{Functional Correlations and Classifiers}

%
\catchline{}{}{}{}{}
%

\title{Functional Correlations in the Pursuit of Performance Assessment of Classifiers}

\author{Nadezhda Gribkova}

\address{Faculty of Mathematics and Mechanics, Saint Petersburg State University,\\ Saint Petersburg 199034, Russia\\
n.gribkova@spbu.ru}

\author{Ri\v cardas Zitikis}

\address{School of Mathematical and Statistical Sciences,
Western University,\\ London, Ontario N6A 5B7, Canada\\
rzitikis@uwo.ca}

\maketitle


\begin{abstract}
In statistical classification and machine learning, as well as in social and other sciences,  a number of measures of association have been proposed for assessing and comparing individual classifiers, raters, as well as their groups. In this paper, we introduce, justify, and explore several new measures of association, which we call CO-, ANTI- and COANTI-correlation coefficients, that we demonstrate to be powerful tools for classifying confusion matrices. We illustrate the performance of these new coefficients using a number of examples, from which we also conclude that the coefficients are new objects in the sense that they differ from those already in the literature.
\end{abstract}

\keywords{Functional Correlation; Classifier; Rater; Confusion Matrix; Weighted Kappa.}

\section{Introduction}
\label{sect-1}

Classification has been a massively important and rapidly developing area of research, and for a glimpse of recent developments on the topic, we refer, for example, to selected contributions presented at the two latest scientific meetings of the Classification and Data Analysis Group (CLADAG) of the Italian Statistical Society \citep[e.g.,][and references therein]{MCV18, GDBV19}.

In the present paper, we focus on ordinal classification, which is a fundamental problem in supervised learning.  Many classification algorithms,
including support vector machines \citep{V00},
random forests \citep{B01},
and instance-based methods \citep[e.g.,][and references therein]{GSS19}
have been modified to handle problems with ordered classes \citep{CK07, HS04, JTB16}.

Model assessment is an important task in supervised classification. In binary or multi-class problems, a common approach to model evaluation is to use confusion matrices and calculate accuracy measures such as error rate, precision and recall. Other approaches include graphical assessment methods such as the ROC curve and reliability diagram \citep{GY19}. When classification problems contain ordered classes, employing those metrics faces difficulties in evaluating classifiers because they ignore the ordering information in the response variable.

For an effective and consistent evaluation for ordinal classification methods, \citet{CP07} and \citet{CS11} have  employed concordant and discordant vectors. These studies have inspired our current work that explores the potential of using various facets of the notion of comonotonicity
\citep[e.g.,][and references therein]{WZ20} for tackling classification problems.

Namely, let $\{1,\dots ,d\}$ be the set of ordinal classes to which subjects or objects $\omega \in \Omega$ are classified. For example, we may think of six ($d=6$) education levels: high school, associate's, bachelor's, master's, professional, and doctoral \citep[e.g.,][]{T18}. Assume that there are two classifiers, $X$ and $Y$, which are mappings from $\Omega$ to $\{1,\dots ,d\}$ and can therefore be formally viewed as random variables defined on the sample space $\Omega$. To exemplify, we may think of $X$ and $Y$ as the highest education levels achieved by parents and their offsprings, respectively.

With $p_{i,j}$ denoting the joint probability $\mathbb{P}(X=i,Y=j)$, we obtain the matrix
\begin{equation}\label{matrix-0}
M_{X,Y}= (p_{i,j})_{i,j=1}^d ,
\end{equation}
which is frequently called the confusion matrix in statistical classification. The marginal distributions of $X$ and $Y$ are given by the probabilities $p_{i\bullet}=\sum_{j=1}^d p_{ij}$ and $p_{\bullet j}=\sum_{i=1}^d p_{ij}$, respectively. For illustrative examples, we refer to studies of social mobility \citep[e.g.,][]{TS77,CJL09,BH06}, and problems of psychometry \citep[e.g.,][]{C68,W13b}. Later in this paper, we shall revisit confusion matrices analyzed by \citet{CS11}.

Associated with the classifier $X$, there is a valuation function $f:\{1,\dots ,d\}\to \mathbb{R}$. For example, we may think of the parent's salary $f(X)$ corresponding to the education level $X$. Likewise, associated with the classifier $Y$, there is a (possibly different) valuation function $g:\{1,\dots ,d\}\to \mathbb{R}$, and we may think of $g(Y)$ as the offspring's salary corresponding to the education level $Y$. From these valuations, we obtain the $d$-variate valuation vectors $\mathbf{f}=(f_i)^{\top}$ and $\mathbf{g}=(g_i)^{\top}$, where $f_i=f(i) $ and $ g_j=g(j)$. The correlation coefficient between the two valuations with respect to the confusion matrix $M_{X,Y}$ is
\begin{equation}\label{corr-fg}
C(\mathbf{f},\mathbf{g})
={\sum_{i,j=1}^d f_i p_{ij} g_j -\big (\sum_{i=1}^d f_i p_{i\bullet}\big) \big ( \sum_{j=1}^d g_j p_{\bullet j} \big )\over
\sqrt{\sum_{i=1}^d f_i^2 p_{i\bullet}-\big ( \sum_{i=1}^d f_i p_{i\bullet}\big )^2 }
~ \sqrt{\sum_{j=1}^d g_j^2 p_{\bullet j}-\big ( \sum_{j=1}^d g_j p_{\bullet j }\big )^2 } } \in [-1,1],
\end{equation}
which is well defined whenever $\mathbf{f},\mathbf{g}\in \mathbb{R}^d$ are
non-degenerate, that is, when the condition $\sum_{i=1}^d f_i^2 p_{i\bullet}-( \sum_{i=1}^d f_i p_{i\bullet} )^2>0$ holds for $\mathbf{f}$, and an analogous condition holds for $\mathbf{g}$.

The coefficient $C(\mathbf{f},\mathbf{g})$ assesses correlation at the individual level, that is, for each pair $\mathbf{f}$ and $\mathbf{g}$ of valuations, such as, using our earlier illustrative interpretation,  the salaries corresponding to the highest education levels $X$ and $Y$ achieved by parents and their offsprings. Note, however, that salaries (usually) vary within certain ranges, depending on the education level. Keeping this in mind, instead of dealing with the correlation $C(\mathbf{f},\mathbf{g})$ for any specific pair of valuations $\mathbf{f}$ and $\mathbf{g}$, we deal with sets of valuations (for example, salary brackets) that satisfy certain order requirements.

For example, assuming that higher levels of education result in higher levels of pay, we naturally deal with ``increasing-increasing'' (II-, for short) valuations, that is, with those $\mathbf{f}$ and $\mathbf{g}$ for which the inequality constraints $f_i\le f_j$ and $g_i\le g_j$ hold whenever $1\le i \le j \le d$. The maximal correlation coefficient over this set of valuations $\mathbf{f}$ and $\mathbf{g}$ gives rise to what is known in the literature as the II-correlation coefficient \citep{KS78,KMS82}. Likewise, we have the ``increasing-decreasing'' (ID-) correlation coefficient. Combining these two coefficients, we obtain the monotone (MON-) correlation coefficient. We refer to \citet{KS78}, and \citet{KMS82} for properties and other details related to these correlation coefficients, but we shall also discuss them later in this paper. Note that when we maximize $C(\mathbf{f},\mathbf{g})$ with respect to all non-degenerate pairs $\mathbf{f}$ and $\mathbf{g}$, we arrive at the supremum (SUP-) correlation coefficient of \citet{G41}.

Following the terminology of \citet{S86} and \citet{D94}, we call the vectors $\mathbf{f}$ and $\mathbf{g}$ comonotone when $(f_i-f_j)(g_i-g_j)\ge 0$ for all  $1\le i,j \le d$, and antimonotone when $(f_i-f_j)(g_i-g_j)\le 0$  for all  $1\le i,j \le d$. Maximizing $C(\mathbf{f},\mathbf{g})$ with respect to all non-degenerate comonotone (resp.~antimonotone) pairs $(\mathbf{f},\mathbf{g})$ gives rise to what we call comonotone (CO-) and antimonotone (ANTI-) correlation coefficients, respectively. We shall show in the following sections that the latter two coefficients as well as a third one that we call the COANTI-correlation coefficient (to be introduced in Section~\ref{sect-3} below) are new notions that, in general, differ from any of the known functional correlation coefficients, such as the aforementioned II-, ID-, and MON-correlation coefficients.

To start appreciating these new coefficients and see their potential in classification, we note that comonotone pairs $(\mathbf{f},\mathbf{g})$ are common and natural valuations in the context of social mobility. For example, referring to the aforementioned example concerning the education levels and salaries, we see from the charts provided by \citet{T18} that salaries do not always increase when educational levels increase. Specifically, a bar chart reported by \citet{T18} shows that among the aforementioned six levels of education, the median weekly earnings peek at the penultimate (i.e., professional) degree, thus making the earlier used II-correlation coefficient suboptimal, while the CO-correlation coefficient perfectly suits the purpose.

The rest of the paper is organized as follows.
In Section~\ref{sect-2}, we discuss the classical II-, ID- and MON-correlation coefficients in the case of generic classifiers $X$ and $Y$.
In Section~\ref{sect-3}, we formally define and discuss the already mentioned CO-, ANTI- and COANTI-correlation coefficients,  which are new measures of association. 
In Section~\ref{sect-4}, we provide a computational method for all (new and old) functional correlation coefficients.
In Section~\ref{sect-5}, we put forward a flowchart for comparing confusion matrices and then use it to revisit matrices of \citet{CS11}. We shall see from the numerical results obtained therein that the newly introduced functional correlation coefficients, in conjunction with the classical ones, provide a powerful and unifying approach to classifying confusion matrices. 
Section~\ref{sect-conclude} concludes the paper with a brief summary of main contributions and some remarks.

\section{Classical functional correlation coefficients}
\label{sect-2}

The Pearson correlation coefficient
\[
\varrho(X,Y)=\mathrm{Corr}[X,Y] \in [-1,1]
\]
has played a pivotal role in various research areas. It is not, however, an independence determining coefficient, that is, the equation $\varrho(X,Y)=0$ does not, in general, imply that $X$ and $Y$ are independent, henceforth shorthanded as $ X \ind Y$. To achieve independence determination, the functional correlation coefficient \citep[cf.][]{S92}
\begin{equation}\label{general}
\varrho(X,Y \mid \mathcal{A})=\sup_{(g,h)\in \mathcal{A}}\mathrm{Corr}[g(X),h(Y)] \in [-1,1]
\end{equation}
becomes a natural tool, defined for various subsets $\mathcal{A}$ of the set
\[
\mathcal{B}
=\big \{ (g,h) ~:~ \mathrm{Var}[g(X)],\mathrm{Var}[h(Y)]\in (0,\infty) \big \}.
\]
Throughout the paper, we deal only with Borel functions, and thus $g$ and $h$ in the definition of $\mathcal{B}$ are tacitly assumed to be such. Note that the set $\mathcal{B}$ depends on the cumulative distribution functions (cdf's) $F_X$ and $F_Y$ of $X$ and $Y$, respectively. Obviously, when $\mathcal{A}=\mathcal{B}$, the coefficient $\varrho(X,Y \mid \mathcal{A})$ is the largest and  therefore called the supremum (SUP-) correlation coefficient \citep{G41}, defined by the equation
\[
\varrho_{\textsc{sup}}(X,Y)=\varrho(X,Y \mid \mathcal{B}) \in [0,1].
\]

\begin{note}
Unless $\mathrm{Var}[X]\in (0,\infty) $ and $\mathrm{Var}[Y]\in (0,\infty) $, the Pearson correlation coefficient $\varrho(X,Y)$ does not exist, but the functional correlation coefficient $\varrho(X,Y \mid \mathcal{B})$ may nevertheless be well defined. For example, when $X=Y\sim \text{Cauchy}(0,1)$, the classical Pearson coefficient $\varrho(X,Y)$ does not exist,  but the functional coefficient $\varrho(X,Y \mid \mathcal{B})$ is finite and equal to $1$.
\end{note}

The SUP-correlation coefficient $\varrho_{\textsc{sup}}(X,Y)$ is independence determining because $\mathcal{B} \supseteq \mathcal{B}_1 $, where
\[
\mathcal{B}_1
=\big \{ (\mathbb{I}_{(-\infty, x]}, \mathbb{I}_{(-\infty, y]}\big )
~:~ x,y\in \mathbb{R},~  F_X(x), F_Y(y)\in (0,1) \big \}
\]
with $\mathbb{I}_{(-\infty, x]}$ denoting the indicator function of the interval $(-\infty, x]$, that is, $\mathbb{I}_{(-\infty, x]}(z)$ is equal to $1$ when $z\in (-\infty, x]$ and $0$ when $z \notin (-\infty, x]$. Indeed, for a set $\mathcal{A}$ to be independence determining, the necessary and sufficient condition is $\mathcal{A} \supseteq \mathcal{B}_1 $, under which we have the equivalence relationship
\begin{equation}\label{equiv}
\varrho(X,Y \mid \mathcal{A})=0 \quad \Longleftrightarrow \quad X \ind Y.
\end{equation}

When the set $\mathcal{A}$ is the singleton
\begin{equation}\label{class-b0}
\mathcal{B}_0
=\big \{ (g_0,h_0) ~:~ \mathrm{Var}[X],\mathrm{Var}[Y]\in (0,\infty) \big \}
\end{equation}
with $g_0(x)=x$ and $ h_0(x)=x$ for all $x\in \mathbb{R}$, coefficient~(\ref{general}) reduces to the Pearson correlation coefficient, that is, $\varrho(X,Y)=\varrho(X,Y \mid \mathcal{B}_0)$.
The singleton $\mathcal{B}_0$ is not, however, independence determining, which is of course a well known fact.

When $\mathcal{A}$ is $\mathcal{B}_{\textsc{mon}}$ consisting of all the pairs $(g,h)\in \mathcal{B}$ of monotone functions $g$ and $h$, we have the monotone (MON-) correlation coefficient  \citep{KS78}
\[
\varrho_{\textsc{mon}}(X,Y)=\varrho(X,Y \mid \mathcal{B}_{\textsc{mon}}) \in [0,1].
\]
It is an independence determining coefficient because $ \mathcal{B}_{\textsc{mon}}\supset \mathcal{B}_1 $.

Since $\mathcal{B}_{\textsc{mon}}\subset \mathcal{B}$, we have
$\varrho_{\textsc{mon}}(X,Y)\le \varrho_{\textsc{sup}}(X,Y)$. To compare $\varrho_{\textsc{mon}}(X,Y)$ with the Pearson correlation coefficient $\varrho(X,Y)$, we need to assume $\mathrm{Var}[X]\in (0,\infty) $ and $\mathrm{Var}[Y]\in (0,\infty) $. In this case, we have the inclusions $\mathcal{B}_0\subset \mathcal{B}_{\textsc{mon}}\subset \mathcal{B}$ and thus the ordering
$\varrho(X,Y) \le \varrho_{\textsc{mon}}(X,Y)\le \varrho_{\textsc{sup}}(X,Y)$. In fact, the following inequalities hold:
\begin{equation}\label{corr-bounds}
\big |\varrho(X,Y) \big |
\le \varrho_{\textsc{mon}}(X,Y)
\le \varrho_{\textsc{sup}}(X,Y) .
\end{equation}
We can easily find random variables $X$ and $Y$ such that $|\varrho(X,Y) |< \varrho_{\textsc{mon}}(X,Y)$. We also know from \citet{KS78} that there are $X$ and $Y$
such that $\varrho_{\textsc{mon}}(X,Y) < \varrho_{\textsc{sup}}(X,Y)$. Hence, in general, the three correlation coefficients in (\ref{corr-bounds}) are distinct.

The set $\mathcal{B}_{\textsc{mon}}$ is the union of i) $\mathcal{B}_{\textsc{ii}}$ consisting of  the pairs $(g,h)\in \mathcal{B}$ of increasing functions $f$ and $g$, ii) $\mathcal{B}_{\textsc{id}}$ consisting of  the pairs $(g,h)\in \mathcal{B}$ of increasing functions $f$ and decreasing functions $g$, and iii) the sets $\mathcal{B}_{\textsc{di}}$ and $\mathcal{B}_{\textsc{dd}}$ defined analogously. Since $\mathrm{Corr}[g(X),h(Y)]$ is equal to $\mathrm{Corr}[-g(X),-h(Y)]$, we have the equations
$\varrho(X,Y \mid \mathcal{B}_{\textsc{di}})=\varrho(X,Y \mid \mathcal{B}_{\textsc{id}})$ and
$\varrho(X,Y \mid \mathcal{B}_{\textsc{dd}})=\varrho(X,Y \mid \mathcal{B}_{\textsc{ii}})$, which effectively leaves us with only two correlation coefficients \citep{KMS82}:
\begin{align*}
\varrho_{\textsc{ii}}(X,Y)   &=\varrho(X,Y \mid \mathcal{B}_{\textsc{ii}}) \in [-1,1],
\\
\varrho_{\textsc{id}}(X,Y) &=\varrho(X,Y \mid \mathcal{B}_{\textsc{id}}) \in [-1,1].
\end{align*}
In the terminology of \citet{KMS82}, $\varrho_{\textsc{ii}}(X,Y)$ is the concordant monotone correlation coefficient, and $-\varrho_{\textsc{id}}(X,Y)$ is the discordant monotone correlation coefficient. Throughout the paper, we call $\varrho_{\textsc{ii}}(X,Y)$ the II-correlation coefficient and $\varrho_{\textsc{id}}(X,Y)$ the ID-correlation coefficient. \citet{KMS82} note the equation
\begin{equation}\label{st-2a}
\varrho_{\textsc{mon}}(X,Y)=\max \big \{\varrho_{\textsc{ii}}(X,Y),\varrho_{\textsc{id}}(X,Y)  \big \} \in [0,1].
\end{equation}

The three functional correlation coefficients making up equation (\ref{st-2a}) have played a considerable role in understanding social mobility \citep[e.g.,][]{KMS82,S92}. While working on the current paper and testing various computational algorithms, we explored a number of social mobility  tables, such as those reported by \citet{TS77}, \citet{CJL09}, and \citet{BH06}. Interestingly, all of those tables have shown that their SUP-correlation coefficients are achieved on increasing pairs $g$ and $h$, thus implying that for those tables, the II-, MON- and SUP-correlation coefficients are the same or, to be precise, coincide at least up to the sixth decimal digit, which is our default computational precision throughout this paper. (Obtaining closed-form formulas for any of the functional correlation coefficients seems to be an impossible task, which is not really needed as far as we can see.) Finally, we note that during our numerical explorations, we found the computational ideas of \citet{KMS81} particularly helpful.

\section{New comonotonicity-related functional correlation coefficients}
\label{sect-3}

As we have noted above, the functional correlation coefficients corresponding to the confusion matrices arising from some intergenerational mobility surveys are maximized on increasing $g$ and $h$. This is not, however, always the case, as we see from the data discussed by \citet{T18}, where comonotone $g$ and $h$ manifest naturally. As a reaction to this fact, we next introduce what we call CO-, ANTI- and COANTI-correlation coefficients, and in Section~\ref{sect-5} we shall demonstrate their power when comparing confusion matrices. Since analytical formulas are not available due to the nature of these coefficients, we shall offer two effective computational algorithms in Sections~\ref{sect-4} and \ref{sect-5}.

The aforementioned three correlation coefficients rely on comonotone functions, which play a prominent role in areas such as quantitative finance, insurance,
and economics \citep[e.g.,][and references therein]{FS16}.
Namely, functions $g$ and $h$ are said to be comonotone \citep{S86,D94} if
\begin{equation}\label{com-def}
\big (g(x)-g(x')\big ) \big (h(x)-h(x')\big )\ge 0
\end{equation}
holds for all $x$ and $x'$ in the joint domain of definition of both $g$ and $h$.

To the best of our knowledge, the notion of comonotonicity has not yet been fully utilized in classification problems, with initial impressive hints on the notion's potential in such problems provided by \citet{CP07} and \citet{CS11}.

\begin{definition}
Let $\mathcal{B}_{\textsc{co}}$ denote the set of all pairs $(g,h)\in \mathcal{B}$ of comonotone functions. We call
\[
\varrho_{\textsc{co}}(X,Y)=\varrho(X,Y \mid \mathcal{B}_{\textsc{co}}) \in [-1,1]
\]
the comonotone (CO-) correlation coefficient.
\end{definition}

If non-negativity in condition~(\ref{com-def}) is replaced by non-positivity, then $g$ and $h$ are said to be antimonotone. That is, $g$ and $h$ are antimonotone functions if
\begin{equation}\label{anti-def}
\big (g(x)-g(x')\big ) \big (h(x)-h(x')\big )\le 0
\end{equation}
holds for all $x$ and $x'$ in the joint domain of definition of both $g$ and $h$.

\begin{definition}
Let $\mathcal{B}_{\textsc{anti}}$ denote the set of all pairs $(g,h)\in \mathcal{B}$ of antimonotone functions. We call
\[
\varrho_{\textsc{anti}}(X,Y)=\varrho(X,Y \mid \mathcal{B}_{\textsc{anti}}) \in [-1,1].
\]
the antimonotone (ANTI-) correlation coefficient.
\end{definition}

We next relate the CO- and ANTI-correlation coefficients to the classical II- and ID-correlation coefficients.

\begin{proposition}\label{prop-2}
We have the representations
\begin{gather}
\varrho_{\textsc{co}}(X,Y) = \sup_{h}\varrho_{\textsc{ii}}(h(X),h(Y)),
\label{st-4}
\\
\varrho_{\textsc{anti}}(X,Y)= \sup_{h}\varrho_{\textsc{id}}(h(X),h(Y)),
\label{st-5}
\end{gather}
where the suprema on the right-hand sides of equations~(\ref{st-4}) and~(\ref{st-5}) are taken with respect to all functions $h$ for which the suprema are well defined.
\end{proposition}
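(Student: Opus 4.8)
The plan is to derive both identities from a single structural fact about comonotone functions and then obtain the antimonotone case by a sign change. The central tool I would prove first is the decomposition lemma: if $g$ and $h$ are comonotone, then, setting $u=g+h$, there exist nondecreasing functions $\phi$ and $\psi$ with $g=\phi\circ u$ and $h=\psi\circ u$. To justify it, write $a=g(x)-g(x')$ and $b=h(x)-h(x')$. If $u(x)=u(x')$ then $a=-b$, and comonotonicity gives $ab=-b^2\ge 0$, forcing $a=b=0$; hence $g$ and $h$ are constant on the level sets of $u$, so $\phi$ and $\psi$ are well defined on the range of $u$. If instead $u(x)\le u(x')$ but $a>0$, then comonotonicity forces $b\ge 0$ and so $u(x)-u(x')=a+b>0$, a contradiction; thus $u(x)\le u(x')$ implies $g(x)\le g(x')$ and, symmetrically, $h(x)\le h(x')$, making $\phi,\psi$ nondecreasing (and therefore Borel).

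With this in hand I would prove~(\ref{st-4}) by two inequalities. For the inequality ``$\le$'', take any $(g,h)\in\mathcal{B}_{\textsc{co}}$, write $g=\phi\circ u$ and $h=\psi\circ u$, and note that
\[
\mathrm{Corr}[g(X),h(Y)]=\mathrm{Corr}[\phi(u(X)),\psi(u(Y))]\le \varrho_{\textsc{ii}}(u(X),u(Y)),
\]
because $(\phi,\psi)$ is an increasing pair admissible in the supremum defining $\varrho_{\textsc{ii}}(u(X),u(Y))$. Since $u$ is one particular function, the right-hand side is at most $\sup_h \varrho_{\textsc{ii}}(h(X),h(Y))$, and taking the supremum over comonotone pairs yields $\varrho_{\textsc{co}}(X,Y)\le \sup_h \varrho_{\textsc{ii}}(h(X),h(Y))$. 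For the inequality ``$\ge$'', fix an admissible $h$ and an increasing pair $(\phi,\psi)$; the compositions $\phi\circ h$ and $\psi\circ h$ are comonotone, since monotone maps preserve the sign of $h(x)-h(x')$, so $(\phi\circ h,\psi\circ h)\in\mathcal{B}_{\textsc{co}}$ and $\mathrm{Corr}[\phi(h(X)),\psi(h(Y))]\le\varrho_{\textsc{co}}(X,Y)$. Taking the supremum over $(\phi,\psi)$ gives $\varrho_{\textsc{ii}}(h(X),h(Y))\le\varrho_{\textsc{co}}(X,Y)$, and then the supremum over $h$ gives the reverse inequality, completing~(\ref{st-4}).

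For~(\ref{st-5}) I would rerun the argument after recording the antimonotone variant of the lemma: if $g$ and $h$ are antimonotone, then $g$ and $-h$ are comonotone, so with $u=g-h$ one has $g=\phi\circ u$ and $h=-\psi\circ u$ for nondecreasing $\phi,\psi$, i.e.\ $g$ is a nondecreasing and $h$ a nonincreasing function of $u$. Hence $(\phi,-\psi)\in\mathcal{B}_{\textsc{id}}$ for $(u(X),u(Y))$, giving $\mathrm{Corr}[g(X),h(Y)]\le \varrho_{\textsc{id}}(u(X),u(Y))$; conversely, composing an increasing $\phi$ and a decreasing $\psi$ with a fixed $h$ produces an antimonotone pair. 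The two inequalities then close exactly as before.

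The routine bookkeeping is the non-degeneracy of the variances along these reductions: one needs $u(X)$ and $u(Y)$ non-degenerate for the inner coefficients to be well defined, which holds because $\mathrm{Var}[g(X)]=\mathrm{Var}[\phi(u(X))]>0$ forces $u(X)$ non-constant and symmetrically for $Y$; in the bounded confusion-matrix setting all variances are automatically finite, and in general one restricts, as the statement already does, to those $h$ for which the suprema are well defined. I expect the only genuine obstacle to be the decomposition lemma itself---in particular verifying that $\phi$ and $\psi$ are well defined and monotone on the range of $u$---while the supremum-comparison steps are then essentially formal.
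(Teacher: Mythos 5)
Your proof is correct and follows essentially the same route as the paper: the paper's entire argument is to invoke Denneberg's Proposition 4.5(iv) --- that $g$ and $h$ are comonotone if and only if both are increasing functions of a common function --- and then to reduce the antimonotone case via the observation that $(g,h)$ is antimonotone iff $(g,-h)$ is comonotone, which is precisely your decomposition lemma (with the standard witness $u=g+h$) and your sign-flip step. The only difference is that you prove the decomposition from scratch and spell out the two supremum inequalities, which the paper leaves to the citation.
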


\begin{proof}
Equation~(\ref{st-4}) follows from the fact that any two functions $f$ and $g$ are comonotone if and only if there is a function $h$ and also two increasing functions $h_1$ and $h_2$ such that $f(x)=h_1(h(x))$ and $g(x)=h_2(h(x))$ \citep[Proposition 4.5(iv), 54--55]{D94}. Equation~(\ref{st-5}) follows from equation~(\ref{st-4}) by observing that $f$ and $g$ are antimonotone if and only if $f$ and $(-g)$ are comonotone.
\end{proof}

\begin{definition}
Let $\mathcal{B}_{\textsc{coanti}}$ denote the set of all pairs $(g,h)\in \mathcal{B}$ of comonotone, as well as antimonotone, functions $f$ and $g$, and we thus collectively call them ``coanti'' functions. In turn, we call
\[
\varrho_{\textsc{coanti}}(X,Y)=\varrho(X,Y \mid \mathcal{B}_{\textsc{coanti}}) \in [0,1]
\]
the COANTI-correlation coefficient.
\end{definition}

Analogously to equation~(\ref{st-2a}), we have
\begin{equation}\label{st-2}
\varrho_{\textsc{coanti}}(X,Y)=\max \big \{\varrho_{\textsc{co}}(X,Y),\varrho_{\textsc{anti}}(X,Y)  \big \},
\end{equation}
which, in view of representations~(\ref{st-4}) and~(\ref{st-5}), gives the equation
\begin{equation}\label{st-2c}
\varrho_{\textsc{coanti}}(X,Y)=\max \big \{
\sup_{h}\varrho_{\textsc{ii}}(h(X),h(Y)),
\sup_{h}\varrho_{\textsc{id}}(h(X),h(Y)) \big \}.
\end{equation}
Equation~(\ref{st-2c}) is particularly useful when comparing the COANTI- and MON-correlation coefficients by way of comparing the right-hand sides of equations~(\ref{st-2c}) and~(\ref{st-2a}).

Since $\mathcal{B}_0\subset \mathcal{B}_{\textsc{mon}}\subset \mathcal{B}_{\textsc{coanti}}\subset \mathcal{B}$, we have the inequalities
\begin{equation}\label{corr-bounds-2z}
\varrho_{\textsc{mon}}(X,Y)
\le \varrho_{\textsc{coanti}}(X,Y)
\le \varrho_{\textsc{sup}}(X,Y)
\end{equation}
and also, assuming $\mathbb{E}[X^2]\in (0,\infty) $ and $\mathbb{E}[Y^2]\in (0,\infty) $,
\begin{equation}\label{corr-bounds-3}
\big |\varrho(X,Y) \big |
\le \varrho_{\textsc{mon}}(X,Y)
\le \varrho_{\textsc{coanti}}(X,Y)
\le \varrho_{\textsc{sup}}(X,Y).
\end{equation}
Since $\mathcal{B}_1\subset \mathcal{B}_{\textsc{mon}}\subset \mathcal{B}_{\textsc{coanti}}\subset \mathcal{B}$, the MON-, COANTI-, and SUP-correlation coefficients are independence determining, that is, equivalence relationship (\ref{equiv}) holds with $\mathcal{A}$ replaced by the respective three $\mathcal{B}$'s. The following proposition connects the MON- and COANTI-correlation coefficients.

\begin{proposition}\label{prop-1}
The COANTI- and MON-correlation coefficients are related via the equation
\begin{equation}\label{st-1}
\varrho_{\textsc{coanti}}(X,Y)=\sup_{h}\varrho_{\textsc{mon}}(h(X),h(Y)),
\end{equation}
where the supremum on the right-hand side of equation~(\ref{st-1}) is taken with respect to all functions $h$ for which the supremum is well defined.
\end{proposition}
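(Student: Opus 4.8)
The plan is to prove equation~(\ref{st-1}) by combining representations~(\ref{st-4}) and~(\ref{st-5}) of Proposition~\ref{prop-2} with equation~(\ref{st-2a}), which expresses $\varrho_{\textsc{mon}}$ as the maximum of $\varrho_{\textsc{ii}}$ and $\varrho_{\textsc{id}}$. First I would apply equation~(\ref{st-2a}) to the transformed pair $\big(h(X),h(Y)\big)$ in place of $(X,Y)$, obtaining
\[
\varrho_{\textsc{mon}}\big(h(X),h(Y)\big)
=\max\big\{\varrho_{\textsc{ii}}\big(h(X),h(Y)\big),\varrho_{\textsc{id}}\big(h(X),h(Y)\big)\big\}.
\]
Taking the supremum over all admissible $h$ on both sides then yields
\[
\sup_{h}\varrho_{\textsc{mon}}\big(h(X),h(Y)\big)
=\sup_{h}\max\big\{\varrho_{\textsc{ii}}\big(h(X),h(Y)\big),\varrho_{\textsc{id}}\big(h(X),h(Y)\big)\big\}.
\]

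The key step is to interchange the supremum over $h$ with the binary maximum. For nonnegative quantities (recall $\varrho_{\textsc{coanti}}\in[0,1]$), the supremum of a pointwise maximum equals the maximum of the suprema, so
\[
\sup_{h}\max\{a(h),b(h)\}=\max\big\{\sup_{h}a(h),\sup_{h}b(h)\big\}.
\]
Applying this identity with $a(h)=\varrho_{\textsc{ii}}\big(h(X),h(Y)\big)$ and $b(h)=\varrho_{\textsc{id}}\big(h(X),h(Y)\big)$ gives
\[
\sup_{h}\varrho_{\textsc{mon}}\big(h(X),h(Y)\big)
=\max\big\{\sup_{h}\varrho_{\textsc{ii}}\big(h(X),h(Y)\big),\sup_{h}\varrho_{\textsc{id}}\big(h(X),h(Y)\big)\big\}.
\]
By Proposition~\ref{prop-2}, the two inner suprema equal $\varrho_{\textsc{co}}(X,Y)$ and $\varrho_{\textsc{anti}}(X,Y)$, respectively, and their maximum is precisely $\varrho_{\textsc{coanti}}(X,Y)$ by equation~(\ref{st-2}). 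This chain of equalities establishes~(\ref{st-1}).

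The main obstacle I anticipate is purely technical rather than conceptual: one must verify that the class of functions $h$ over which the suprema are taken is genuinely the same on all lines, and that the interchange of $\sup_h$ with $\max$ is legitimate given the ``well-defined'' caveat in the statement. Specifically, the set of $h$ for which $\varrho_{\textsc{mon}}\big(h(X),h(Y)\big)$ is defined must coincide (or at least have matching suprema) with those for which the II- and ID-versions are defined; since nondegeneracy of $h(X)$ and $h(Y)$ is the only requirement and is shared across all three coefficients, this should cause no difficulty. I would simply remark that the identity $\sup\max=\max\sup$ holds for any family of pairs of real numbers, so no measure-theoretic subtlety enters, and the proof reduces to a bookkeeping argument linking the three prior results.
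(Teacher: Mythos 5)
Your proposal is correct and follows essentially the same route as the paper: both arguments combine Proposition~\ref{prop-2} and equation~(\ref{st-2a}) applied to the pair $\big(h(X),h(Y)\big)$, and both reduce to interchanging $\sup_{h}$ with the binary maximum. The only difference is presentational --- you invoke the general identity $\sup_{h}\max\{a(h),b(h)\}=\max\{\sup_{h}a(h),\sup_{h}b(h)\}$ directly (which, incidentally, needs no nonnegativity assumption), whereas the paper establishes the same interchange by a case analysis on which of the two inner suprema attains the maximum in equation~(\ref{st-2c}).
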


\begin{proof}
We start with equation~(\ref{st-2c}) and assume for the sake of argument that $\varrho_{\textsc{coanti}}(X,Y)$ is maximized by $\sup_{h}\varrho_{\textsc{ii}}(h(X),h(Y))$. This implies that $\sup_{h}\varrho_{\textsc{ii}}(h(X),h(Y))$ is not smaller than $\varrho_{\textsc{id}}(h_{*}(X),h_{*}(Y))$ for every function $h_{*}$ for which the latter correlation coefficient is well defined. Hence, we have the equation
\begin{equation}
\sup_{h}\varrho_{\textsc{ii}}(h(X),h(Y))
= \sup_{h}\max \big \{\varrho_{\textsc{ii}}(h(X),h(Y)),\varrho_{\textsc{id}}(h_{*}(X),h_{*}(Y)) \big \}.
\label{st-21}
\end{equation}
Since equation~(\ref{st-21}) holds for every $h_{*}$, we can choose $h_{*}$ to be $h$. We obtain the equations
\begin{align}
\sup_{h}\varrho_{\textsc{ii}}(h(X),h(Y))
&= \sup_{h}\max \big \{\varrho_{\textsc{ii}}(h(X),h(Y)),\varrho_{\textsc{id}}(h(X),h(Y)) \big \}
\notag
\\
&= \sup_{h}\varrho_{\textsc{mon}}(h(X),h(Y)),
\label{st-22}
\end{align}
where the right-most equation holds due to equation~(\ref{st-2a}). Hence, $\varrho_{\textsc{coanti}}(X,Y)$ is equal to the right-hand side of equation~(\ref{st-2a}). We also arrive at the same conclusion when $\varrho_{\textsc{coanti}}(X,Y)$ is maximized by $\sup_{h}\varrho_{\textsc{id}}(h(X),h(Y))$, thus finishing the verification of Property~\ref{prop-1}.
\end{proof}

It should be noted that we have not yet provided convincing evidence that the three functional correlation coefficients (that is, CO, ANTI, and COANTI) are truly new, that is, that they differ from the classical ones of Section~\ref{sect-2}. For this reason, in the next section we shall construct a confusion matrix (Example~\ref{example-00}) that gives rise to the following (strict) inequalities:
\begin{equation}\label{coanti-0}
\varrho_{\textsc{mon}}(X,Y)
< \varrho_{\textsc{coanti}}(X,Y)
< \varrho_{\textsc{sup}}(X,Y) .
\end{equation}
The values of the three correlation coefficients have to be calculated numerically, and we shall do so at the usual for this paper precision of six decimal digits, but we shall see (equation~\eqref{eq-00} below) differences in their values at the first decimal digit.

We conclude the current section by enhancing our intuitive appreciation of the three functional correlation coefficients in (\ref{coanti-0}). For this, we first recall the six education levels noted earlier: high school, associate's, bachelor's, master's, professional, and doctoral \citep[e.g.,][]{T18}. Suppose next that we are interested in determining the strongest possible correlation between all the possible salaries of, say, daughters and mothers depending on their education levels.

If we assume that salaries increase when the education level increases, then it is appropriate to use the (classical) coefficient $\varrho_{\textsc{ii}}(X,Y)$ or, more broadly, $\varrho_{\textsc{mon}}(X,Y)$, but as \citet{T18} illustrates, this assumption of monotonicity is sometimes too optimistic to be realistic.

Allowing for varying salary patterns within the group of daughters, as well as within the group of mothers, gives rise to the need for calculating the above introduced coefficient $\varrho_{\textsc{co}}(X,Y)$ or, more broadly, $\varrho_{\textsc{coanti}}(X,Y)$, which assesses maximal synchronicity (that is, comonotonicity) between the salaries of daughters and mothers, without subjectively imposing the opinion that the salaries are increasing when the education level increases.

Finally, the (classical) correlation coefficient $\varrho_{\textsc{sup}}(X,Y)$ allows not only for various salary patterns within the group of daughters, as well as within the group of mothers, but also various patterns between the two groups. Hence, the coefficient $\varrho_{\textsc{sup}}(X,Y)$ is not designed to assess synchronicity between the two groups -- it measures the strongest possible correlation with respect to every possible salary pattern that may arise.

We shall keep these illustrations in mind when assessing and interpreting confusions matrices in the following sections, and in particular when describing the flowchart presented in Section~\ref{sect-5.1}.

\section{Calculating functional correlation coefficients}
\label{sect-4}

In this section we work within the framework of Section~\ref{sect-1}. That is, we let $X$ and $Y$ be two classifiers taking values in the set $\{1,\dots ,d\}$. Their joint probabilities $p_{i,j}=\mathbb{P}(X=i,Y=j)$ give rise to confusion matrix~(\ref{matrix-0}).
The correlation coefficient $C(\mathbf{f},\mathbf{g})$ between the valuation vectors $\mathbf{f}=(f_i)^{\top}$ and $\mathbf{g}=(g_i)^{\top}$ is defined by formula~(\ref{corr-fg}). Without loss of generality we can, and thus do, impose the constraints
\begin{equation}\label{con-2}
\sum_{i=1}^d f_i p_{i\bullet}=\sum_{j=1}^d  g_j p_{\bullet j}=0
\quad \text{and} \quad
\sum_{i=1}^d f_i^2 p_{i\bullet}=\sum_{j=1}^d  g_j^2p_{\bullet j}=1,
\end{equation}
under which the correlation coefficient simplifies to
\begin{equation}
C(\mathbf{f},\mathbf{g})=\sum_{i,j=1}^d f_i p_{ij} g_j.
\label{cxy-eq-0}
\end{equation}

\subsection{An excursion into psychometric literature}
\label{sub-41}

Equation (\ref{cxy-eq-0}) plays a pivotal role in connecting the current research with a large body of literature dealing with the weighted kappa \citep{C68}. Indeed, keeping in mind constraints~(\ref{con-2}), equation~(\ref{cxy-eq-0}) can be rewritten as follows:
\begin{align}
C(\mathbf{f},\mathbf{g})
&=1-{1\over 2}\sum_{i,j=1}^d (f_i-g_j)^2 p_{ij}
\notag
\\
&=1-{\sum_{i,j=1}^d w_{ij} p_{ij} \over \sum_{i,j=1}^d w_{ij} p_{i\bullet}p_{\bullet j} } ,
\label{cxy-eq-000}
\end{align}
where $w_{ij}=(f_i-g_j)^2$. Usually in the literature, the vectors $\mathbf{f}$ and $\mathbf{g}$ are set to $f_i=i$ and $g_j=j$ for all $1\le i, j \le d$. Note that these special choices correspond to the class $\mathcal{B}_0$ defined by equation~(\ref{class-b0}). The weights $w_{ij}$ become quadratic $w_{ij}=(i-j)^2$ \citep{FC73}.

The right-hand side of equation~(\ref{cxy-eq-000}) defines what is known in the literature as the weighted kappa, with $w_{ij}$'s called ``disagreement'' weights, which can be any, depending on the problem at hand. For example, \citet{C60} uses  the indicator weights $w_{ij}=\mathbb{I}\{i\ne j\}$.  \citet{CA71} suggest to use the ``linear'' (also known as absolute) weights $w_{ij}=|i-j|$. Analyses and comparisons of linear and quadratic weights have been done by \citet{VA09a}, \citet{V16}, and \citet{K18}, where we also find extensive references to earlier works on the topic.

Weights $w_{ij}$ based on more general valuation vectors $\mathbf{f}$ and $\mathbf{g}$, known as category scores, have also been considered, although quite often by setting $\mathbf{f}=\mathbf{g}$. For example, \citet{VEM05}, \citet{THT12}, and \citet{K18} use disagreement weights such as $w_{ij}=|f_i-f_j|$ and $w_{ij}=(f_i-f_j)^2$ for monotone $f_1\le \cdots \le f_d$. In this paper, we do not impose requirements such as  $\mathbf{f}=\mathbf{g}$, thus allowing for the possibility of having very diverse classification patterns. Yet, as is the case with all indices, by condensing data into one number, the weighted kappa inevitably loses information, irrespective of the weights employed.

Some specific criticism has been directed toward the quadratic weights by \citet{W13a,W13b} and \citet{V16}, among others. Indeed, there are several reasons for being cautious when using such weights because they, being non-linear, distort distances between categories. Also notably, since the weighted kappa is closely related to the Pearson correlation coefficient, as pointed out by a number of researchers \citep[e.g.,][]{FC73,S04,K18}, the kappa is not an independence determining coefficient, and thus gives rise to some difficulties when measuring dependence between classifiers, as we already noted in Section~\ref{sect-1}. Fortunately, a good way out of the difficulty exists, and this brings us back to the notion of functional correlations.

\subsection{Matrix-based counterparts of the classical functional correlations}
\label{sub-42}

We start by recalling that $X\sim F_X$ and $Y\sim F_Y$ are independent if and only if the equation
\[
\mathrm{Corr}\big [\mathbb{I}_{(-\infty, x]}(X),\mathbb{I}_{(-\infty, y]}(Y)\big ]=0
\]
holds for all $x,y\in \mathbb{R} $ such that both $ F_X(x)$ and $ F_Y(y)$ are in $ (0,1)$. Consequently, when assessing association between classifiers, we can use the Pearson correlation coefficient, but we need to calculate it over a sufficiently large set of valuation vectors (category scores) $ \mathbf{f}$ and $\mathbf{g} $. This reasoning, though perhaps not always explicitly stated in the literature, has lead researchers \cite[e.g.,][]{SF73,HS78,K81,FA86} to the idea of sorting out  classification problems with the help of several indices, inncluding the Pearson correlation coefficient, Spearman's $\rho$, Kendall's $\tau$, and Spearman's footrule. A somewhat different, although closer to ours, research path has been taken by \citet{VA09b,VA09c}, and we next elaborate on it.

Within the framework of Section~\ref{sect-1}, all possible paired classifiers, also known as raters, are represented by the pairs $ (\mathbf{f},\mathbf{g}) $ of $d$-dimensional valuation vectors. Without loss of generality, we assume that constraints~(\ref{con-2}) are satisfied. We denote the set of all such pairs $ (\mathbf{f},\mathbf{g}) $ by $\mathcal{S}_{\textsc{sup}} $.  The maximal correlation coefficient over the set of all such pairs with respect to the confusion matrix $M_{X,Y}$ defines the SUP-correlation coefficient
\begin{equation}\label{cor-sup}
\varrho_{\textsc{sup}}(M_{X,Y}) = \sup_{(\mathbf{f},\mathbf{g}) \in \mathcal{S}_{\textsc{sup}}}
C(\mathbf{f},\mathbf{g}) \in [0,1].
\end{equation}

As we have noted earlier, there are good reasons for restricting the pairs  $(\mathbf{f},\mathbf{g})\in \mathcal{S}_{\textsc{sup}}$ to only those that satisfy $ f_i\le f_j $ and $  g_i\le g_j $ for all $1\le i \le j \le d$. Denote the set of all such pairs by $\mathcal{S}_{\textsc{ii}} $. The corresponding II-correlation coefficient is
\[
\varrho_{\textsc{ii}}(M_{X,Y}) = \sup_{(\mathbf{f},\mathbf{g}) \in \mathcal{S}_{\textsc{ii}}}
C(\mathbf{f},\mathbf{g}) \in [-1,1].
\]
Analogously, the ID-correlation coefficient is given by
\[
\varrho_{\textsc{id}}(M_{X,Y})= \sup_{(\mathbf{f},\mathbf{g})\in \mathcal{S}_{\textsc{id}} } C(\mathbf{f},\mathbf{g}) \in [-1,1],
\]
where $\mathcal{S}_{\textsc{id}}$ consists of all the pairs $(\mathbf{f},\mathbf{g})\in \mathcal{S}_{\textsc{sup}}$ such that $ f_i\le f_j $ and $ g_i\ge g_j $ for all $1\le i \le j \le d$. Below are two notes concerning confusion matrices that maximize the II- and ID-correlation coefficients.

\begin{note}\label{example-1a}
If the confusion matrix $M_{X,Y}$ is diagonal, that is, $p_{ij}=0$ for all $i\ne j$, then  equation~(\ref{cxy-eq-000}) implies the formula
\[
C(\mathbf{f},\mathbf{g})=1-{1\over 2}\sum_{i=1}^d (f_i-g_i)^2 p_{ii},
\]
and thus the II-correlation coefficient is equal to 1. This value is achieved on the pair $(\mathbf{f},\mathbf{g}) \in \mathcal{S}_{\textsc{ii}}$ with $f_i=g_i$ for all $1\le i \le d$.
\end{note}

\begin{note}\label{example-1b}
If the confusion matrix $M_{X,Y}$ is anti-diagonal, that is, $p_{ij}=0$ for all $i+j\ne d+1$, then equation~(\ref{cxy-eq-000}) implies the formula
\[
C(\mathbf{f},\mathbf{g})=1-{1\over 2}\sum_{i=1}^d (f_i-g_{d-i+1})^2 p_{i,d-i+1},
\]
and thus the ID-correlation coefficient is equal to 1. This value is achieved on the pair $(\mathbf{f},\mathbf{g}) \in \mathcal{S}_{\textsc{id}}$ with $f_i=g_{d-i+1}$ for all $1\le i \le d$.
\end{note}

The MON-correlation coefficient is given by the equation
\begin{equation}\label{mon-expression}
\varrho_{\textsc{mon}}(M_{X,Y}) = \sup_{(\mathbf{f},\mathbf{g}) \in \mathcal{S}_{\textsc{mon}}}
C(\mathbf{f},\mathbf{g}) \in [0,1],
\end{equation}
where $\mathcal{S}_{\textsc{mon}}$ consists of all those pairs $(\mathbf{f},\mathbf{g})\in \mathcal{S}_{\textsc{sup}}$ whose coordinates are either increasing or decreasing, that is,
$\mathcal{S}_{\textsc{mon}}$ is the union of the four sets
$\mathcal{S}_{\textsc{ii}}$,
$\mathcal{S}_{\textsc{id}}$,
$\mathcal{S}_{\textsc{di}}$, and
$\mathcal{S}_{\textsc{dd}}$.
In fact, analogous arguments to those above equation (\ref{st-2a}) show that we only need to work with the first two subsets, $\mathcal{S}_{\textsc{ii}}$ and
$\mathcal{S}_{\textsc{id}}$, thus reducing equation (\ref{mon-expression}) to
\[
\varrho_{\textsc{mon}}(M_{X,Y}) = \sup_{(\mathbf{f},\mathbf{g}) \in \mathcal{S}_{\textsc{ii}}\cup \mathcal{S}_{\textsc{id}}}
C(\mathbf{f},\mathbf{g}),
\]
which in turn gives the following equation
\begin{equation}\label{st-2aa}
\varrho_{\textsc{mon}}(M_{X,Y})
= \max \big\{ \varrho_{\textsc{ii}}(M_{X,Y}), \varrho_{\textsc{id}}(M_{X,Y})\big \}.
\end{equation}

\subsection{The new functional correlations adapted to confusion matrices}
\label{sub-43}

The three functional correlation coefficients that we introduced above are CO, ANTI, and COANTI. We shall next adapt their definitions to tackle confusion matrices. We begin with the CO-correlation coefficient and have
\[
\varrho_{\textsc{co}}(M_{X,Y}) = \sup_{(\mathbf{f},\mathbf{g}) \in \mathcal{S}_{\textsc{co}}}
C(\mathbf{f},\mathbf{g}) \in [-1,1],
\]
where $\mathcal{S}_{\textsc{co}}$ consists of all comonotone pairs $(\mathbf{f},\mathbf{g})\in \mathcal{S}_{\textsc{sup}}$, that is, of those for which the bound $(f_i-f_j)(g_i-g_j)\ge 0 $ holds for all $ 1\le i,j \le d$.

The ANTI-correlation coefficient is
\[
\varrho_{\textsc{anti}}(M_{X,Y})= \sup_{(\mathbf{f},\mathbf{g})\in \mathcal{S}_{\textsc{anti}} } C(\mathbf{f},\mathbf{g}) \in [-1,1],
\]
where $\mathcal{S}_{\textsc{anti}}$ consists of all antimonotone pairs $(\mathbf{f},\mathbf{g})\in \mathcal{S}_{\textsc{sup}}$, that is, of those for which the bound $(f_i-f_j)(g_i-g_j)\le 0  $ holds for all $ 1\le i,j \le d$.

Analogously to equation~(\ref{st-2}), the COANTI-correlation coefficient is the maximum of the CO- and ANTI-correlation coefficients, that is,
\begin{equation}\label{st-2bb}
\varrho_{\textsc{coanti}}(M_{X,Y})
= \max \big\{ \varrho_{\textsc{co}}(M_{X,Y}), \varrho_{\textsc{anti}}(M_{X,Y})\big \} \in [0,1].
\end{equation}

In view of Notes~\ref{example-1a} and~\ref{example-1b}, if the confusion matrix $M_{X,Y}$ is diagonal, then the CO-correlation coefficient is equal to 1, and if the confusion matrix is anti-diagonal, then the ANTI-correlation coefficient is equal to 1. In both of these  cases, as follows from equation~(\ref{st-2bb}), the COANTI-correlation coefficient is equal to 1.

To illustrate the functional correlation coefficients, and in particular to show that the new ones are indeed distinct from those already in the literature, we have devised the following example.

\begin{example}\label{example-00}
Let a $3\times 3$ confusion matrix be defined as follows:
\begin{equation}\label{matrix-000}
CM_0=
\bordermatrix{%
      & g_1     & g_2     & g_3     \cr
f_1   & 0.1 & 0 & 0.1 \cr
f_2   & 0.2 & 0 & 0.2 \cr
f_3   & 0 & 0.2 & 0.2 \cr
}.
\end{equation}
With the help of MATLAB's \texttt{fmincon} function (optimization with constraints of nonlinear multi-argument functions), we obtain the following values:
\begin{align*}
\varrho_{\textsc{ii}}(CM_0)   = 0.5345
 \quad \text{when} \quad &
\left\{
    \begin{array}{ll}
\mathbf{f}_{\textsc{ii}} & = (-0.8179,~  -0.8157,~   1.2247)\\
\mathbf{g}_{\textsc{ii}} & = (-1.5275,~   0.6546,~   0.6546)
    \end{array}
  \right.
\\
\varrho_{\textsc{id}}(CM_0) = 0.0000
 \quad \text{when} \quad &
\left\{
    \begin{array}{ll}
\mathbf{f}_{\textsc{id}}& =   (-1.7931,~  -0.0469,~   0.9434)\\
\mathbf{g}_{\textsc{id}}& =   (1.0000,~   0.9999,~  - 0.9999)
    \end{array}
  \right.
\\
\varrho_{\textsc{co}}(CM_0) = 0.5345
 \quad \text{when} \quad &
\left\{
    \begin{array}{ll}
\mathbf{f}_{\textsc{co}} & = (-0.8179,~  -0.8157,~   1.2247)\\
\mathbf{g}_{\textsc{co}} & = (-1.5275,~   0.6546,~   0.6546)
    \end{array}
  \right.
\\
\varrho_{\textsc{anti}}(CM_0) = 0.6123
 \quad \text{when} \quad &
\left\{
    \begin{array}{ll}
\mathbf{f}_{\textsc{anti}}& = (0.8149,~   0.8172,~  -1.2247)\\
\mathbf{g}_{\textsc{anti}}& = (0.4999,~  -1.9999,~   0.5000)
    \end{array}
  \right.
\\
\varrho_{\textsc{sup}}(CM_0) = 0.7071
 \quad \text{when} \quad &
\left\{
    \begin{array}{ll}
\mathbf{f}_{\textsc{sup}}& = (-0.8164,~  -0.8164,~   1.2247)\\
\mathbf{g}_{\textsc{sup}}& = (-1.1547,~   1.7320,~  -0.0000)
    \end{array}
  \right.
\end{align*}
In particular, from these results and equations~(\ref{st-2aa}) and~(\ref{st-2bb}), we have
\begin{equation}\label{eq-00}
\varrho_{\textsc{mon}}(CM_0)= 0.5345
\quad \textrm{and} \quad
\varrho_{\textsc{coanti}}(CM_0)= 0.6123 .
\end{equation}
Hence, the COANTI-correlation coefficient can indeed differ from the MON- and SUP-correlation coefficients; recall bounds~\eqref{coanti-0} and the discussion around them. As we have already noted above, all our calculations are at the six decimal-digit precision, even though we are reporting only the first four decimal digits. In Figure~\ref{fig-quartet}, we have depicted $\mathbf{f}$ and $\mathbf{g}$ corresponding to the ID-, CO-, ANTI- and SUP-correlation coefficients using the straight lines connecting the points $(1,f_1), (2,f_2),(3,f_3)$ as well as those connecting the points $(1,g_1), (2,g_2),(3,g_3)$.
\begin{figure}[h!]
  \centering
  \subfigure[$\mathbf{f}_{\textsc{id}}$ (solid) and $\mathbf{g}_{\textsc{id}}$ (dashed)]{%
    \includegraphics[height=0.26\textwidth]{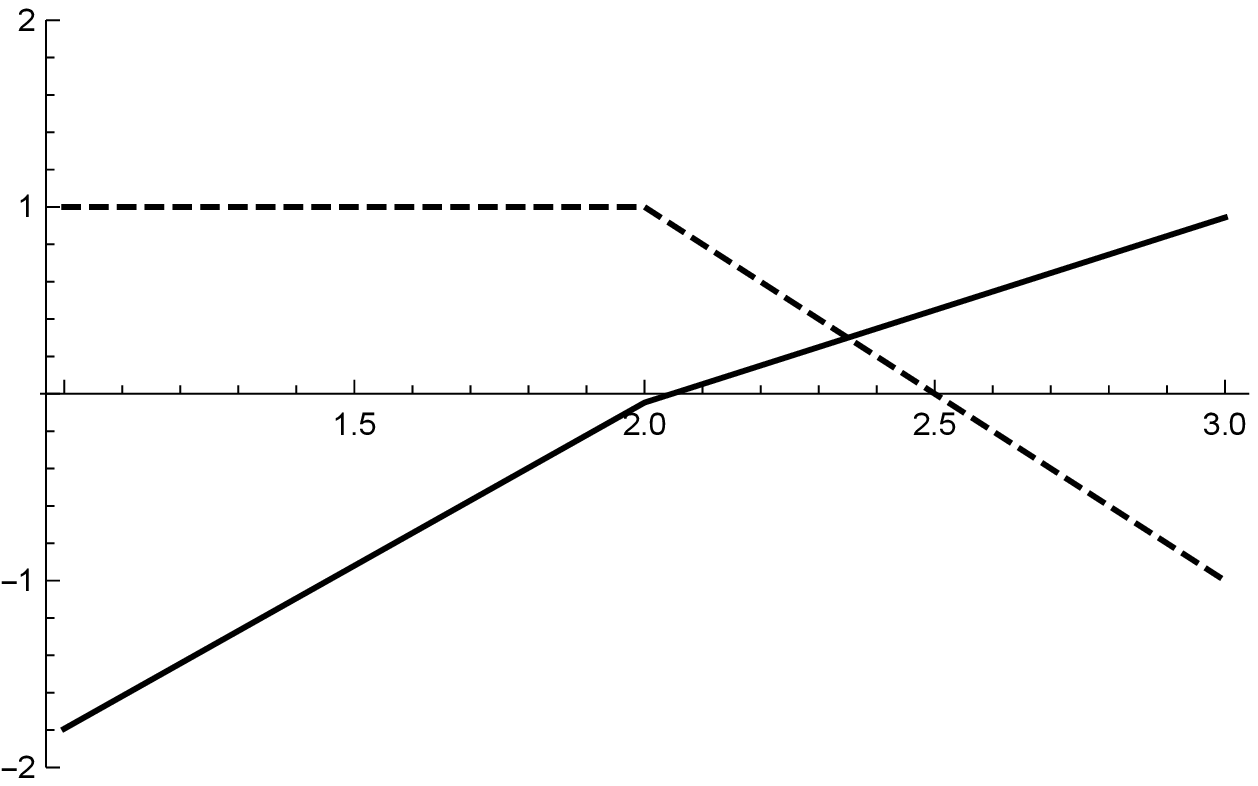}}%
\hspace{0.5in}
  \subfigure[$\mathbf{f}_{\textsc{co}}$ (solid) and $\mathbf{g}_{\textsc{co}}$ (dashed)]{%
    \includegraphics[height=0.26\textwidth]{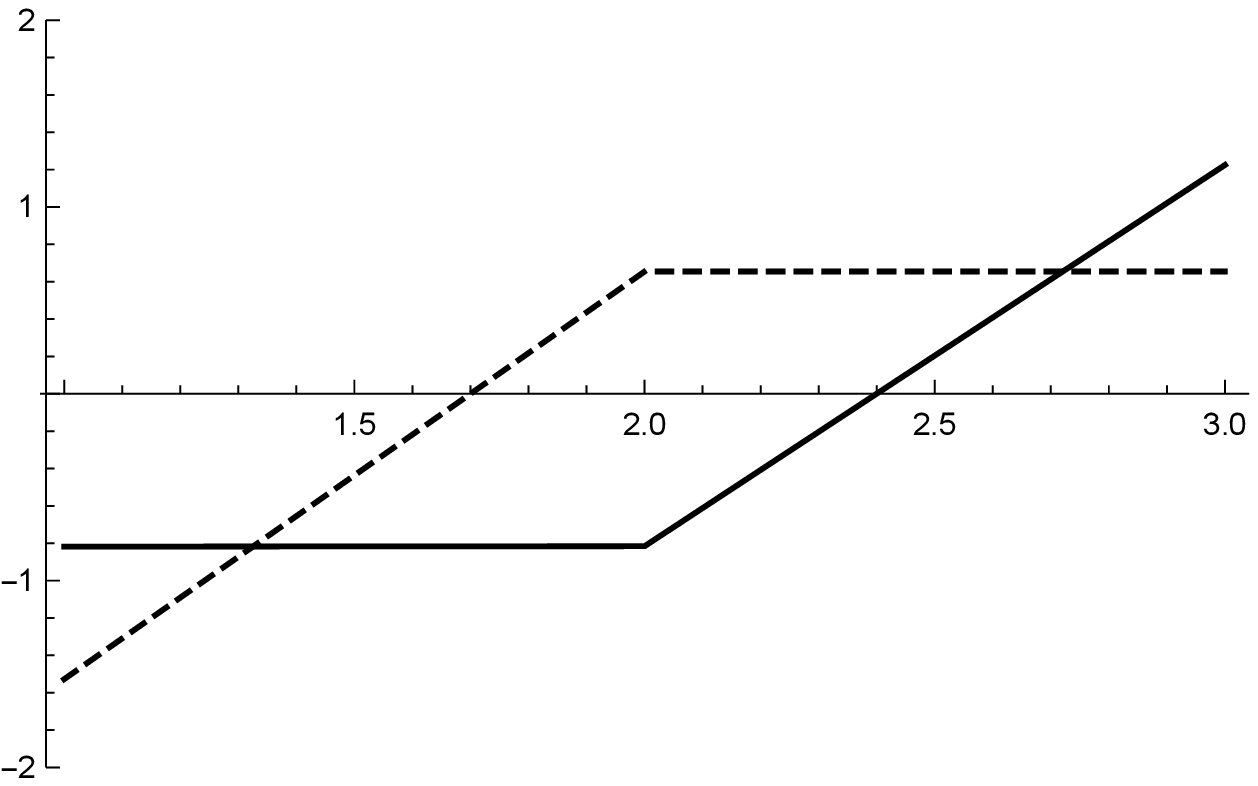}}%
\hfill
  \subfigure[$\mathbf{f}_{\textsc{anti}}$ (solid) and $\mathbf{g}_{\textsc{anti}}$ (dashed)]{%
    \includegraphics[height=0.26\textwidth]{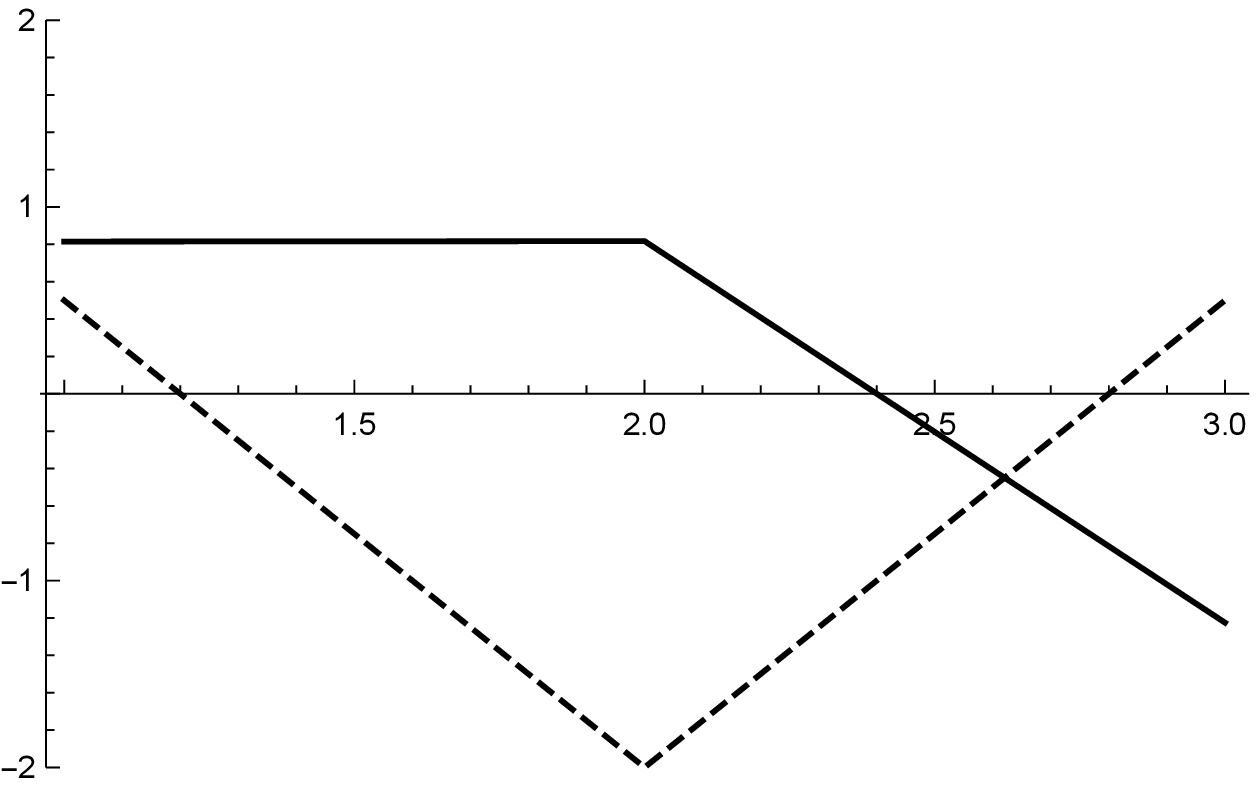}}%
\hspace{0.5in}
  \subfigure[$\mathbf{f}_{\textsc{sup}}$ (solid) and $\mathbf{g}_{\textsc{sup}}$ (dashed)]{%
    \includegraphics[height=0.26\textwidth]{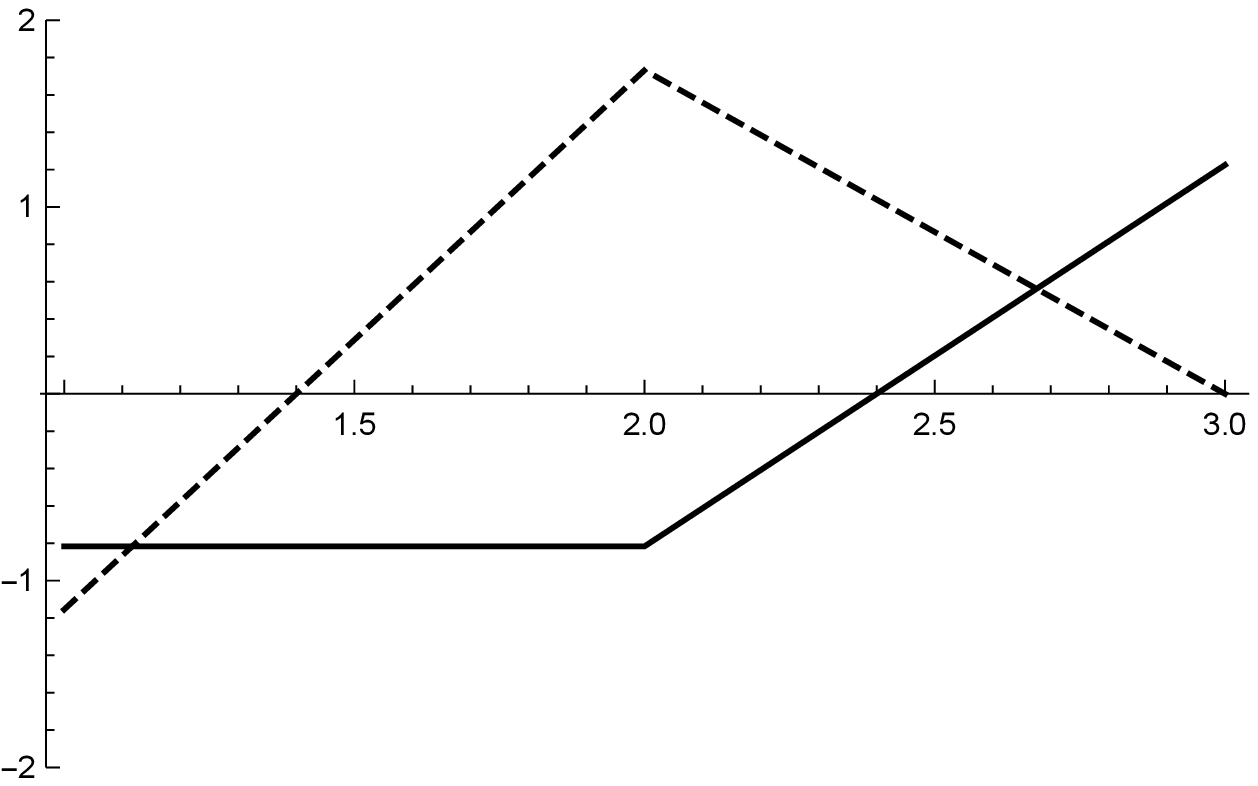}}%
\vspace*{8pt}
\caption{Vectors $\mathbf{f}$ and $\mathbf{g}$ visualized as lines that realize the ID-, CO-, ANTI- and SUP-correlation coefficients in the case of confusion matrix~(\ref{matrix-000}).}
\label{fig-quartet}
\end{figure}
\end{example}

\section{Functional correlation coefficients in action}
\label{sect-5}

As noted above, earlier works by statisticians and social scientists produced fairly satisfactory classification results based on the MON-correlation coefficient, but examples such as those by \citet{T18} initiate a search for more refined tools.

The classical functional correlations (recall Section~\ref{sect-2}), apparently forgotten for quite some time, were rejuvenated by \citet{CP07} and \citet{CS11}. As natural consequences of these studies, in this paper we have suggested the CO-, ANTI-, and COANTI-correlation coefficients, which alongside the classical coefficients, give rise to an efficient and accurate assessment of confusion matrices. To streamline this task, we next propose a flowchart.

\subsection{A flowchart for comparing confusion matrices}
\label{sect-5.1}

We are now in the position to revisit the thirteen confusion matrices of \citet{CS11} by means of the flowchart depicted in Figure~\ref{flowchart}.
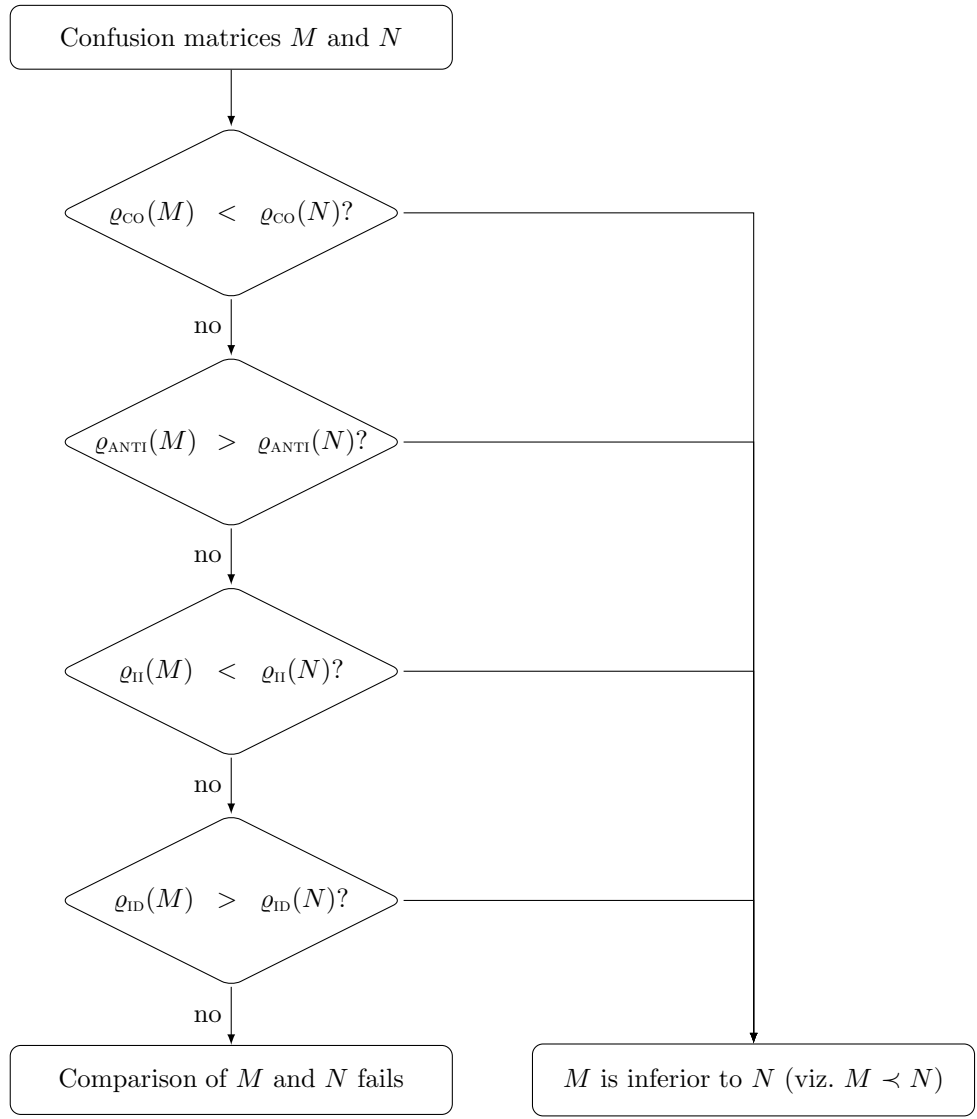
\begin{figure}[h!]
  \centering
\begin{tikzpicture}[-latex]
  \matrix (chart)
    [
      matrix of nodes,
      column sep      = 3em,
      row sep         = 5ex,
      column 1/.style = {nodes={decision}},
      column 2/.style = {nodes={env}}
    ]
    {
      |[root]| Confusion matrices $M$ and $N$          \\
$\varrho_{\textsc{co}}(M)<\varrho_{\textsc{co}}(N)$?    \\
$\varrho_{\textsc{anti}}(M)>\varrho_{\textsc{anti}}(N)$?    \\      $\varrho_{\textsc{ii}}(M)<\varrho_{\textsc{ii}}(N)$?    \\      $\varrho_{\textsc{id}}(M)>\varrho_{\textsc{id}}(N)$?    \\
      |[finish]|  Comparison of $M$ and $N$ fails
      &  |[finish]| $M$ is inferior to $N$ (viz. $M \prec N$)  \\
    };
  \draw(chart-1-1) edge (chart-2-1)
   \foreach \x/\y in {2/3, 3/4, 4/5, 5/6} {(chart-\x-1) \no (chart-\y-1) };
  \foreach \x in {2,...,5} {\draw (chart-\x-1) -| (chart-6-2);}
\end{tikzpicture}
  \caption{A flowchart for comparing confusion matrices.}\label{flowchart}
\end{figure}
A few remarks concerning the flowchart are in order (recall the notes at the end of Section~\ref{sect-3} for enhancing intuition).

First, it is natural to view a classifier superior to, or at least not inferior than, any other if its CO-correlation coefficient is equal to 1, which happens, for example, in the case of the diagonal confusion matrix, as noted below equation~\eqref{st-2bb}. Hence, we view a confusion matrix $M$ inferior to another matrix $N$ when $\varrho_{\textsc{co}}(M)<\varrho_{\textsc{co}}(N)$.

Second, it may happen that $\varrho_{\textsc{co}}(M)=\varrho_{\textsc{co}}(N)$. In this case, it is  natural to seek help from the ANTI-correlation coefficient. (We should point out that antimonotonicity is not an antonym of comonotonicity: there are vectors which are neither comonotone nor antimonotone.) We view $M$ inferior to $N$ whenever $\varrho_{\textsc{anti}}(M)>\varrho_{\textsc{anti}}(N)$, which is reflected in the second step of the flowchart.

We also encounter confusion matrices for which we have $\varrho_{\textsc{co}}(M)=\varrho_{\textsc{co}}(N)$ and $\varrho_{\textsc{anti}}(M)=\varrho_{\textsc{anti}}(N)$. In this case, we seek help from the classical II- and ID-correlation coefficients, as pointed out in the third and fourth steps of the flowchart, which mimic the first and second steps, respectively.

\subsection{The matrices of \citet{CS11} revisited}

We start with the matrices $CM_{1}$--$CM_{6}$ \citep[p.~1185]{CS11}. The first pair that we compare are $CM_{1}$ and $CM_{2}$. We have:
\begin{align*}
CM_1 &=
\bordermatrix{%
      & g_1     & g_2     & g_3   \cr
f_1   & 0.2 &                  0 &  0.1 \cr
f_2   & 0.1 &  0.1 &                  0 \cr
f_3   & 0.2 &  0.1 &  0.2 \cr
}
\Longrightarrow
\left\{
\begin{array}{ll}
     \varrho_{\textsc{ii}}(CM_1)  &=  0.2309\\
     \varrho_{\textsc{id}}(CM_1) &=  0.0476\\
                 \varrho_{\textsc{co}}(CM_1) &= 0.4330\\
               \varrho_{\textsc{anti}}(CM_1) &=  0.0476\\
               \varrho_{\textsc{sup}}(CM_1) &= 0.4537\\
\end{array}
\right.
\\
\\
CM_2 &=
\bordermatrix{%
      & g_1     & g_2     & g_3   \cr
f_1   & 0.1  &                 0    &               0 \cr
f_2   & 0                  &    0.4 &               0 \cr
f_3   & 0.2  & 0.2    & 0.1 \cr
}
\Longrightarrow
\left\{
\begin{array}{ll}
     \varrho_{\textsc{ii}}(CM_2) &= 0.5091\\
     \varrho_{\textsc{id}}(CM_2) &= 0.2182\\
                 \varrho_{\textsc{co}}(CM_2) &= 0.7165\\
               \varrho_{\textsc{anti}}(CM_2) &=  0.2182\\
               \varrho_{\textsc{sup}}(CM_2) &=  0.7165\\
\end{array}
\right.
\end{align*}
Since $\varrho_{\textsc{co}}(CM_{1})<\varrho_{\textsc{co}}(CM_{2})$, we conclude that $CM_{1} \prec CM_{2}$, which coincides with the conclusion by \citet{CS11}.

Following the discussion of \citet{CS11}, we next compare $CM_{3}$ and $CM_{4}$, for which we have the following results:
\begin{align*}
CM_3 &=
\bordermatrix{%
      & g_1    & g_2     & g_3    \cr
f_1   & 0.1428 &  0       &  0.1428 \cr
f_2   & 0      &  0       &  0 \cr
f_3   & 0.3285 &  0.2857 &  0 \cr
}
\Longrightarrow
\left\{
\begin{array}{ll}
     \varrho_{\textsc{ii}}(CM_3) &=  -0.0912\\
     \varrho_{\textsc{id}}(CM_3) &=  0.6454\\
                 \varrho_{\textsc{co}}(CM_3) &= 0.3999\\
               \varrho_{\textsc{anti}}(CM_3) &= 0.6892\\
               \varrho_{\textsc{sup}}(CM_3) &=  0.6892\\
\end{array}
\right.
\\
\\
CM_4 &=
\bordermatrix{%
      & g_1     & g_2     & g_3    \cr
f_1   & 0.1428   &    0                 &  0.1428 \cr
f_2   & 0                   &    0.2857 &  0.1428 \cr
f_3   & 0.1428   &    0.1428 &                  0 \cr
}
\Longrightarrow
\left\{
\begin{array}{ll}
     \varrho_{\textsc{ii}}(CM_4) &=  0.2999\\
     \varrho_{\textsc{id}}(CM_4) &=  0.3281\\
                 \varrho_{\textsc{co}}(CM_4) &=  0.5902\\
               \varrho_{\textsc{anti}}(CM_4) &= 0.3281\\
               \varrho_{\textsc{sup}}(CM_4) &= 0.5902\\
\end{array}
\right.
\end{align*}
Since $\varrho_{\textsc{co}}(CM_{3})<\varrho_{\textsc{co}}(CM_{4})$, we have $CM_{3} \prec CM_{4}$, which coincides with the conclusion by \citet{CS11}.

We now compare $CM_{5}$ and $CM_{6}$ of \citet{CS11}:
\begin{align*}
CM_5 &=
\bordermatrix{%
      & g_1     & g_2     & g_3   & g_4     \cr
f_1   & 0.1428 &  0.1428 &  0       &        0 \cr
f_2   & 0      &  0.1428 &  0       &        0.1428 \cr
f_3   & 0      &  0      &  0       &        0.3285 \cr
f_4   & 0      &  0      &  0       &        0 \cr
}
\Longrightarrow
\left\{
\begin{array}{ll}
     \varrho_{\textsc{ii}}(CM_5) &= 0.8660\\
     \varrho_{\textsc{id}}(CM_5) &= -0.3535\\
                 \varrho_{\textsc{co}}(CM_5) &=  0.8660\\
               \varrho_{\textsc{anti}}(CM_5) &= 0.8416\\
               \varrho_{\textsc{sup}}(CM_5) &= 0.8660\\
\end{array}
\right.
\\
\\
CM_6 &=
\bordermatrix{%
      & g_1     & g_2     & g_3   & g_4     \cr
f_1   & 0                 &  0                    &    0.1428 &   0 \cr
f_2   & 0.1428 &  0.1428    &    0.1428 &   0 \cr
f_3   & 0.1428 &  0.1428    &    0.1428 &   0 \cr
f_4   & 0                 &  0                    &    0                 &   0 \cr
}
\Longrightarrow
\left\{
\begin{array}{ll}
     \varrho_{\textsc{ii}}(CM_6) &= -0.0912\\
     \varrho_{\textsc{id}}(CM_6) &=  0.4714\\
                 \varrho_{\textsc{co}}(CM_6) &= 0.2581\\
               \varrho_{\textsc{anti}}(CM_6) &= 0.4714\\
               \varrho_{\textsc{sup}}(CM_6) &= 0.4714\\
\end{array}
\right.
\end{align*}
Since $\varrho_{\textsc{co}}(CM_{5})>\varrho_{\textsc{co}}(CM_{6})$, we have $CM_{5} \succ CM_{6}$, which coincides with the conclusion by \citet{CS11}.

We next explore $CM_{10}$, $CM_{11}$ and $CM_{12}$ of \citet[p.~1187]{CS11}:
\begin{align*}
CM_{10} &=
\bordermatrix{%
      & g_1     & g_2     & g_3   & g_4   & g_5   \cr
f_1   & 0 &                  0 &                  0 &                  0       &        0 \cr
f_2   & 0 &  0.2083 &  0.0291 &                  0       &        0 \cr
f_3   & 0 &  0.0083 &  0.3916 &  0.0083       &        0 \cr
f_4   & 0 &                  0 &  0.0458 &  0.1625       &        0 \cr
f_5   & 0 &                  0 &                  0 &  0.0208       &        0.1250 \cr
}
\Longrightarrow
\left\{
\begin{array}{ll}
     \varrho_{\textsc{ii}}(CM_{10}) &= 0.9459\\
     \varrho_{\textsc{id}}(CM_{10}) &=  -0.2109\\
                 \varrho_{\textsc{co}}(CM_{10}) &=0.9459\\
               \varrho_{\textsc{anti}}(CM_{10}) &=  -0.0512\\
               \varrho_{\textsc{sup}}(CM_{10}) &=0.9459\\
\end{array}
\right.
\\
\\
CM_{11} &=
\bordermatrix{%
      & g_1     & g_2     & g_3   & g_4  & g_5    \cr
f_1   & 0    &               0  &                 0 &                  0 &                  0 \cr
f_2   & 0    &               0  & 0.1875 &  0.0500 &                  0 \cr
f_3   & 0    &               0  & 0.0083 &  0.3625 &  0.0375 \cr
f_4   & 0    &               0  &                 0 &  0.0250 &  0.1833 \cr
f_5   & 0    &               0  &                 0 &                  0 &  0.1458 \cr
}
\Longrightarrow
\left\{
\begin{array}{ll}
     \varrho_{\textsc{ii}}(CM_{11}) &=  0.8966\\
     \varrho_{\textsc{id}}(CM_{11}) &= -0.2039\\
                 \varrho_{\textsc{co}}(CM_{11}) &= 0.8966\\
               \varrho_{\textsc{anti}}(CM_{11}) &= 0.8434\\
               \varrho_{\textsc{sup}}(CM_{11}) &= 0.8966\\
\end{array}
\right.
\\
\\
CM_{12} &=
\bordermatrix{%
      & g_1     & g_2     & g_3   & g_4  & g_5    \cr
f_1   & 0 &                  0 &                  0 &                  0 &                  0 \cr
f_2   & 0 &  0.2083 &  0.0291 &                  0 &                  0 \cr
f_3   & 0 &  0.0083 &  0.3916 &  0.0083 &                  0 \cr
f_4   & 0 &                  0 &  0.0875 &  0.1208 &                  0 \cr
f_5   & 0 &                  0 &                  0 &  0.1208 &  0.0250 \cr
}
\Longrightarrow
\left\{
\begin{array}{ll}
     \varrho_{\textsc{ii}}(CM_{12}) &=  0.9096\\
     \varrho_{\textsc{id}}(CM_{12}) &= -0.2173\\
                 \varrho_{\textsc{co}}(CM_{12}) &=  0.9096\\
               \varrho_{\textsc{anti}}(CM_{12}) &= 0.5520\\
               \varrho_{\textsc{sup}}(CM_{12}) &= 0.9096\\
\end{array}
\right.
\end{align*}
According to the flowchart of Figure~\ref{flowchart}, we first compare the CO-correlation coefficients of the three matrices and conclude that $CM_{10} \succ CM_{11}$, $CM_{11} \prec CM_{12}$, and $CM_{10} \succ CM_{12}$. Hence, $CM_{10} $ exhibits the best result, which coincides with the conclusion of \citet{CS11}. Furthermore, $CM_{11} $ exhibits the worst result among the three matrices, which also coincides with the conclusion of \citet{CS11}.

Finally, we look at the confusion matrices $CM(A)$--$CM(D)$ of \citet[p.~1177]{CS11}. The matrices are the most problematic ones for the classifiers of the aforementioned paper, and they are also troublesome for the flowchart of Figure~\ref{flowchart}. We begin with results:
\begin{align*}
CM(A) &=
\bordermatrix{%
      & g_1     & g_2     & g_3   & g_4     \cr
f_1   & 0.3076         & 0                  & 0  & 0                 \cr
f_2   & 0                         & 0.4615  & 0  & 0                 \cr
f_3   & 0                         & 0                  & 0  & 0                 \cr
f_4   & 0                         & 0                  & 0  & 0.2307 \cr
}
\Longrightarrow
\left\{
\begin{array}{ll}
    \varrho_{\textsc{ii}}(CM(A)) &= 1\\
     \varrho_{\textsc{id}}(CM(A)) &=-0.3651\\
                 \varrho_{\textsc{co}}(CM(A)) &=1\\
               \varrho_{\textsc{anti}}(CM(A)) &= -0.3651\\
               \varrho_{\textsc{sup}}(CM(A)) &=  1\\
\end{array}
\right.
\\
\\
CM(B) &=
\bordermatrix{%
      & g_1     & g_2     & g_3   & g_4     \cr
f_1   &  0  & 0.3076 &                  0 &                  0 \cr
f_2   &  0  &                 0 &  0.4615 &                  0 \cr
f_3   &  0  &                 0 &                  0 &                  0 \cr
f_4   &  0  &                 0 &                  0 &  0.2307 \cr
}
\Longrightarrow
\left\{
\begin{array}{ll}
     \varrho_{\textsc{ii}}(CM(B)) &= 1\\
     \varrho_{\textsc{id}}(CM(B)) &= -0.3651\\
                 \varrho_{\textsc{co}}(CM(B)) &=1\\
               \varrho_{\textsc{anti}}(CM(B)) &= 1\\
               \varrho_{\textsc{sup}}(CM(B)) &= 1\\
\end{array}
\right.
\\
\\
CM(C) &=
\bordermatrix{%
      & g_1     & g_2     & g_3   & g_4     \cr
f_1   & 0  &                 0 &  0.3076 &                  0 \cr
f_2   & 0  &                 0 &  0.4615 &                  0 \cr
f_3   & 0  &                 0 &                  0 &                  0 \cr
f_4   & 0  &                 0 &                  0 &  0.2307 \cr
}
\Longrightarrow
\left\{
\begin{array}{ll}
     \varrho_{\textsc{ii}}(CM(C)) &=  1\\
     \varrho_{\textsc{id}}(CM(C)) &=  -0.3651\\
                 \varrho_{\textsc{co}}(CM(C)) &=  1\\
               \varrho_{\textsc{anti}}(CM(C)) &=  1\\
               \varrho_{\textsc{sup}}(CM(C)) &= 1\\
\end{array}
\right.
\\
\\
CM(D) &=
\bordermatrix{%
      & g_1     & g_2     & g_3   & g_4     \cr
f_1   & 0                 &  0.3076  & 0 &                  0 \cr
f_2   & 0.4615 &  0                  & 0 &                  0 \cr
f_3   & 0                 &  0                  & 0 &                  0 \cr
f_4   & 0                 &  0                  & 0 &  0.2307 \cr
}
\Longrightarrow
\left\{
\begin{array}{ll}
\varrho_{\textsc{ii}}(CM(D))   &= 1 \\
\varrho_{\textsc{id}}(CM(D))   &= 0.6172 \\
\varrho_{\textsc{co}}(CM(D))   &= 1 \\
\varrho_{\textsc{anti}}(CM(D)) &= 1  \\
\varrho_{\textsc{sup}}(CM(D))  &= 1\\
\end{array}
\right.
\end{align*}
For every $M\in \{CM(A),CM(B),CM(C),CM(D)\}$, we have $\varrho_{\textsc{co}}(M)=1$  and thus the first step of the flowchart fails to differentiate between the matrices. The second step singles out the matrix $CM(A)$ as superior because $\varrho_{\textsc{anti}}(CM(A))<\varrho_{\textsc{anti}}(M)$ for all $M\in \{CM(B),CM(C),CM(D)\}$. To compare the performance of the latter three matrices, we first observe that the third step of the flowchart fails to introduce clarity because $\varrho_{\textsc{ii}}(M)=1$ for all the matrices. Hence, we need to move on to the fourth step, which suggests that $CM(D)$ is inferior to both $ CM(B)$ and $CM(C)$ because $\varrho_{\textsc{id}}(CM(D))>\varrho_{\textsc{id}}(M)$ for $M\in \{CM(B),CM(C)\}$. The flowchart does not, however, succeed in differentiating between the matrices $ CM(B)$ and $CM(C)$ because their respective correlation coefficients coincide. We conclude the discussion concerning the four matrices $CM(A)-CM(D)$ by noting that all the comparisons we have achieved with the help of the functional correlation coefficients and the flowchart of Figure~\ref{flowchart} are in line with the findings of \citet[pp.~1177--1178]{CS11}.

\subsection{An alternative computational algorithm}

All numerical results of this paper have been double-checked using an alternative numerical procedure, which we  describe next. The starting point for the procedure is definition (\ref{corr-fg}) of $C(\mathbf{f},\mathbf{g})$. We simulate two independent $d$-dimensional Gaussian random vectors, both with independent and standard-Gaussian coordinates.

The choice of the $d$-dimensional Gaussian distribution is natural because after the Euclidean normalization of both $\mathbf{f}$ and $\mathbf{g}$, which can always be done without loss of generality in the context of calculating $C(\mathbf{f},\mathbf{g})$, each of the normalized vectors $\mathbf{f}/\Vert \mathbf{f} \Vert $ and $\mathbf{g}/\Vert \mathbf{g} \Vert $ uniformly fill in the unit sphere $S^{d-1}$ \citep[e.g.,][]{M72}.

These Gaussian vectors play the roles of $\mathbf{f}$ and $\mathbf{g}$: if they belong to an appropriate set $\mathcal{S}$ (for example, $\mathcal{S}_{\textsc{ii}} $ when calculating $\varrho_{\textsc{ii}}(M_{X,Y})$), then we calculate $C(\mathbf{f},\mathbf{g})$ and record its value. We repeat the procedure as many times as needed to get $10^6$ values of $C(\mathbf{f},\mathbf{g})$. Finally, we find the maximal value among the obtained $10^6$ values and declare it an approximate value of the functional correlation coefficient under consideration (for example, $\varrho_{\textsc{ii}}(M_{X,Y})$).

\section{Concluding notes}
\label{sect-conclude}

In this paper we have recalled several classical \citep[e.g.,][and references therein]{S92} and also introduced and justified new functional correlation coefficients (CO, ANTI, and COANTI) to aid the assessment of confusion matrices. We have explored properties of these coefficients and illustrated their performance using a number of confusion matrices \citep{CS11}. Our suggested classification algorithm, in the form of a flowchart, is based entirely on these functional correlation coefficients, and it has reached the same conclusions as those by \citet{CS11}, who have used a number of diverse (dis)similarity indices.

Finally, we note that the proposed flowchart in Figure \ref{flowchart}, with its specific choices and arrangement of functional correlation coefficients, should not be viewed as definitive. Indeed, even though we have used, for example, the CO-correlation coefficient at the very top of the flowchart, due to our subject-matter knowledge gained from the research by, among others, \citet{CP07}, \citet{CS11}, and \citet{T18}, there might be contexts, such as classification problems related to social mobility \citep[e.g.,][]{BH06,CJL09}, where placing the II- or some other functional correlation coefficient at the very top of the flowchart would be more appropriate.

\section*{Acknowledgements}

We are grateful to Managing Editor Gernot A. Fink, an Associate Editor, and all the reviewers for the many comments, suggestions, and constructive criticism that guided our work on the revision, which has been substantial. Francesca Greselin's interest in, and enthusiasm about, classification and related problems have been contagious. Conversations with Ruodu Wang on various facets and applications of comonotonicity have been enlightening. We are indebted to Hyukjun (Jay) Gweon for the many fruitful discussions that we have had on the topic of this paper, as well as for his insights and suggestions related to our results. 

Research of the second author has been supported by the Natural Sciences and Engineering Research Council of Canada under the title ``From Data to Integrated Risk Management and Smart Living: Mathematical Modelling, Statistical Inference, and Decision Making,'' as well as by a Mitacs Accelerate Award from the national research organization Mathematics of Information Technology and Complex Systems, Canada, in partnership with Unilever Canada Inc., under the title ``Price Elasticities and Promotion Cannibalization Effect on Promotion Activities.''



%
%

\end{document}